\let\llncssubparagraph\subparagraph
\let\subparagraph\paragraph
\titlespacing{\section}{0pt}{*3}{*2}
\titlespacing{\subsection}{0pt}{*1}{*0}
\titlespacing{\subsubsection}{0pt}{*0}{*0}
\let\subparagraph\llncssubparagraph
\DeclareMathOperator*{\argmin}{arg\,min}
\newcommand{\field}[1]{\mathbb{#1}} 
\newcommand{\Z}{\field{Z}}
\newcommand{\CJS}{\mathrm{CJS}} \newcommand{\CJV}{\mathrm{CJV}}
\newcommand{\CJE}{\mathrm{CJE}} \newcommand{\CJTE}{\mathrm{CJTE}}
\titlerunning{Physical Complexity and Limits} \title{Ultimate
  Intelligence Part II: Physical Complexity and Limits of Inductive
  Inference Systems}
\author{Eray \"Ozkural}
\date{\today}
\begin{document}

\mainmatter 


\institute{G\"{o}k Us Sibernetik Ar\&Ge Ltd. \c{S}ti.}

\maketitle

\begin{abstract}
  We continue our analysis of volume and energy measures that are
  appropriate for quantifying inductive inference systems.  We extend
  logical depth and conceptual jump size measures in AIT to stochastic
  problems, and physical measures that involve volume and energy.  We
  introduce a graphical model of computational complexity that we
  believe to be appropriate for intelligent machines.  We show several
  asymptotic relations between energy, logical depth and volume of
  computation for inductive inference. In particular, we arrive at a
  ``black-hole equation'' of inductive inference, which relates
  energy, volume, space, and algorithmic information for an optimal
  inductive inference solution.  We introduce energy-bounded
  algorithmic entropy.  We briefly apply our ideas to the physical
  limits of intelligent computation in our universe.
\end{abstract}

``Everything must be made as simple as possible. But not
  simpler.''
\begin{flushright}--- Albert Einstein \end{flushright}

\section{Introduction}

We initiated the ultimate intelligence research program in 2014
inspired by Seth Lloyd's similarly titled article on the ultimate
physical limits to computation \cite{Lloyd:Ultimate}, intended as a
book-length treatment of the theory of general-purpose AI. In similar
spirit to Lloyd's research, we investigate the ultimate physical
limits and conditions of intelligence. A main motivation is to extend
the theory of intelligence using physical units, emphasizing the
physicalism inherent in computer science.  This is the second
installation of the paper series, the first part \cite{ozkural-agi15} proposed that
universal induction theory is physically complete arguing that the
algorithmic entropy of a physical stochastic source is always finite,
and argued that if we choose the laws of physics as the reference
machine, the loophole in algorithmic information theory (AIT) of
choosing a reference machine is closed.  We also introduced several
new physically meaningful complexity measures adequate for reasoning
about intelligent machinery using the concepts of minimum volume,
energy and action, which are applicable to both classical and quantum
computers. Probably the most important of the new measures was the
minimum energy required to physically transmit a message. The minimum
energy complexity also naturally leads to an energy prior,
complementing the speed prior \cite{Schmidhuber2002} which inspired
our work on incorporating physical resource limits to inductive
inference theory.

In this part, we generalize logical depth and conceptual jump size to
stochastic sources and consider the influence of volume, space and
energy.  We consider the energy efficiency of computing as an
important parameter for an intelligent system, forgoing other details
of a universal induction approximation. We thus relate the ultimate
limits of intelligence to physical limits of computation.

\section{Notation and Background}

Let us recall Solomonoff's universal distribution \cite{alp1}.  Let $U$ be a
universal computer which runs programs with a prefix-free encoding
like LISP; $y=U(x)$ denotes that the output of program $x$ on $U$ is
$y$ where $x$ and $y$ are bit strings. \footnote{A prefix-free code is
  a set of codes in which no code is a prefix of another. A computer
  file uses a prefix-free code, ending with an EOF symbol, thus, most
  reasonable programming languages are prefix-free. 
}  
Any unspecified variable or function is assumed to be represented as a
bit string. $|x|$ denotes the length of a bit-string $x$.
$f(\cdot)$ refers to function $f$ rather than its application.

The algorithmic probability that a bit string $x \in \{0,1\}^+$ is
generated by a random program $\pi \in \{0,1\}^+$ of $U$ is:
\begin{equation}
  \label{eq:alp}
  P_U(x) = \sum_{U(\pi) \in x(0+1)^* \wedge \pi \in  \{0,1\}^+} 2^{-|\pi|}
\end{equation}
which conforms to Kolmogorov's axioms \cite{levin-thesis}.  $P_U(x)$
considers any continuation of $x$, taking into account non-terminating
programs.\footnote{We used the regular expression notation in language
  theory.}  
$P_U$ is also called the universal prior for it
may be used as the prior in Bayesian inference, for any data can be
encoded as a bit string.

We also give the basic definition of Algorithmic Information Theory
(AIT), where the algorithmic entropy, or complexity of a bit string $x
\in \{0,1\}^+$ is
\begin{equation}
  \label{eq:algo-entropy}
  H_U(x) = \min( \{ |\pi| \ | \  U(\pi)=x \} )
\end{equation}

We shall now briefly recall the well-known Solomonoff induction method
\cite{alp1,alp2}.  Universal sequence induction method of Solomonoff
works on bit strings $x$ drawn from a stochastic source $\mu$.
\prettyref{eq:alp} is a semi-measure, but that is easily overcome as
we can normalize it.  We merely normalize sequence probabilities
\begin{alignat}{3}
  \label{eq:normalization}
  P'_U(x0)=\frac{P_U(x0).P'_U(x)}{P_U(x0)+P_U(x1)} && \quad
  P'_U(x1)=\frac{P_U(x1).P_U'(x)}{P_U(x0)+P_U(x1)}
\end{alignat}
eliminating irrelevant programs and ensuring that the probabilities
sum to $1$, from which point on $P'_U(x0|x) = P'_U(x0)/P'_U(x)$ yields
an accurate prediction. The error bound for this method is the best
known for any such induction method.  The total expected squared error
between $P'_U(x)$ and $\mu$ is
\begin{equation}
  \label{eq:convergence}
  E_P \left[ \sum_{m=1}^n{(P'_U({a_{m+1}=1}|a_1a_2...a_m) -
      \mu({a_{m+1}=1}|a_1a_2...a_m))^2}  \right] 
  \leq - \frac{1}{2} \ln{P_U(\mu)} 
\end{equation}
which is less than $-1/2\ln{P'_U}(\mu)$ according to the convergence
theorem proven in \cite{solcomplexity}, and it is roughly
$H_U(\mu)\ln2$ \cite{solomonoff-threekinds}.  Naturally, this method
can only work if the algorithmic complexity of the stochastic source
$H_U(\mu)$ is finite, i.e., the source has a computable probability
distribution. The convergence theorem is quite significant, because it
shows that Solomonoff induction has the best generalization
performance among all prediction methods. In particular, the total
error is expected to be a constant independent of the input, and the
error rate will thus rapidly decrease with increasing input size.

Operator induction is a general form of supervised machine learning
where we learn a stochastic map from question and answer pairs $q_i,
a_i$ sampled from a (computable) stochastic source $\mu$.  Operator
induction can be solved by finding in available time a set of operator
models $O^j(\cdot|\cdot)$ such that the following goodness of fit is
maximized
\begin{equation}
  \label{eq:opind-gof}
  \Psi = \sum_j{\psi^j_n}
\end{equation}
for a stochastic source $\mu$ where each term in the summation is 
\begin{equation}
  \label{eq:opind-gof-term}
  \psi^j_n= 2^{-|O^j(\cdot|\cdot)|}\prod_{i=1}^n{O^j(a_i|q_i)}.
\end{equation}
$q_i$ and $a_i$ are question/answer pairs in the input dataset, and
$O^j$ is a computable conditional pdf (cpdf) in
\prettyref{eq:opind-gof-term}.  We can use the found operators to
predict unseen data \cite{solomonoff-threekinds}
\begin{equation}
  \label{eq:opind-pred}
  P_U(a_{n+1}|q_{n+1}) = \sum_{j=1}^n\psi^j_nO^j(a_{n+1}|q_{n+1})
\end{equation}
The goodness of fit in this case strikes a balance between high a
priori probability and reproduction of data like in minimum message
length (MML) method, yet uses a universal mixture like in sequence
induction.  The convergence theorem for operator induction was proven
in \cite{solomonoff-progress} using Hutter's extension to arbitrary
alphabet.

Operator induction infers a generalized conditional probability
density function (cpdf), and Solomonoff argues that it can be used to
teach a computer anything.  For instance, we can train the
question/answer system with physics questions and answers, and the
system would then be able to answer a new physics question, dependent
upon how much has been taught in the examples; a future user could ask
the system to describe a physics theory that unifies quantum mechanics
and general relativity, given the solutions of every mathematics and
physics problem ever solved in literature.  Solomonoff's original
training sequence plan proposed to instruct the system first with an
English subset and basic algebra, and then venture into more complex
subjects.  The generality of operator induction is partly due to the
fact that it can be used to learn any kind of association, i.e., it
models an ideal content-addressable memory, but it also generalizes
any kind of law therein implicitly, that is why it can learn an
implicit principle (such as of syntax) from linguistic input, enabling
the system to acquire language; it can also model complex translation
problems, and all manners of problems that require additional
reasoning (computation). In other words, it is a universal problem
solver model.  It is also the most general of the three kinds of
induction, which are sequence, set, and operator induction, and the
closest to machine learning literature. The popular applications of
speech and image recognition are covered by operator induction model,
as is the wealth of pattern recognition applications, such as
describing a scene in English. We think that, therefore, operator
induction is an AI-complete problem -- as hard as solving the
human-level AI problem in general. It is with this in mind that we
analyze the asymptotic behavior of an optimal solution to operator
induction problem.
 
\section{Physical Limits to Universal Induction}

In this section, we elucidate the physical resource limits in the
context of a hypothetical optimal solution to operator induction. We
first extend Bennett's logical depth and conceptual jump size to the
case of operator induction, and show a new relation between expected
simulation time of the universal mixture and conceptual jump size.  We
then introduce a new graphical model of computational complexity which
we use to derive the relations among physical resource bounds.  We
introduce a new definition of physical computation which we call
self-contained computation, which is a physical counterpart to
self-delimiting program.  The discovery of these basic bounds, and
relations, exact, and asymptotic, give meaning to the complexity
definitions of Part I.

Please note that Schmidhuber disagrees with the model of the
stochastic source as a computable pdf \cite{Schmidhuber2002}, 
but Part I contained a strong argument that this was indeed the case.
 A stochastic source cannot have a pdf that is computable only in the limit, if that were the
case, it could have a random pdf, which would have infinite
algorithmic information content, and that is clearly contradicted by
the main conclusion of Part I. A stochastic source cannot be
semi-computable, because it would eventually run out of energy and
hence the ability to generate further quantum entropy, especially the
self-contained computations of this section. That is the reason we had
introduced self-contained computation notion at any rate.  Note also
that Schmidhuber agrees that quantum entropy does not
accumulate to make the world incompressible in general, therefore we
consider his proposal that we should view a cpdf as computable in the
limit as too weak an assumption. As with Part I, the analysis of this
section is extensible to quantum computers, which is beyond the scope
of the present article.

\subsection{Logical depth and conceptual jump size}

Conceptual Jump Size (CJS) is the time required by an incremental
inductive inference system to learn a new concept, and it increases
exponentially in proportion to the algorithmic information content of the
concept to be learned relative to the concepts already known 
\cite{solomonoff-incremental}. 
The physical limits to OOPS based on Conceptual Jump Size were
examined in \cite{oops}. Here, we give a more detailed treatment. Let
$\pi^*$ be the computable cpdf that exactly simulates $\mu$ with
respect to $U$, for operator induction.
\begin{equation}
  \label{eq:minimal}
  \pi^* = \argmin_{\pi_j}(\{ |\pi_j| \ | \ \forall x,y \in \{0,1\}^*:  U(\pi_j,x,y)=\mu(x | y) \})
\end{equation}
The conceptual jump size of inductive inference ($\CJS$) can be
defined with respect to the optimal solution program using Levin
search \cite{sol-perfect}:
\begin{equation}
  \label{eq:cjs}
  \CJS(\mu) = \frac{t(\pi^*)} { P(\pi^*)} \leq  2.\CJS(\mu) 
\end{equation}
where $t(\cdot)$ is the running time of a program on $U$.
\begin{align}
  \label{eq:time}
  H_U(\pi^*) &= -\log_2{P_U(\pi^*)} = -\log_2{P_U(\mu)}\\
  \label{eq:time2}
  t(\mu) &\leq t(\pi^*) 2^{H_U(\mu)+1}
\end{align}
where $t(\mu)$ is the time for solving an induction problem from
source $\mu$ with sufficient input complexity ($>> H_U(\mu)$), we
observe that the asymptotic complexity is
\begin{align}
  \label{eq:time3}
  t(\mu) = O(2^{H_U(\mu)})
\end{align}
for fixed $t(\pi^*)$.  Note that $t(\pi^*)$ corresponds to the
\emph{stochastic} extension of Bennett's logical depth
\cite{bennett88logical}, which was defined as:
``the running time of the minimal program that computes $x$''.
Let us recall that the minimal program is essentially unique, a
polytope in program space \cite{chaitin-ait}.
\begin{definition}
  Stochastic logical depth is the running time of the minimal program
  that accurately simulates a stochastic source $\mu$.
  \begin{equation}
    \label{eq:logicaldepth}
    L_U(\mu) =  t(\pi^*)
  \end{equation}
\end{definition}
which, with \prettyref{eq:time2}, entails our first bound.
\begin{lemma}
\begin{align}
  \label{eq:time3}
  t(\mu) &\leq L_U(\mu). 2^{H_U(\mu)+1}
\end{align}
\end{lemma}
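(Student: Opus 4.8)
The statement to be proved is the inequality $t(\mu) \leq L_U(\mu)\cdot 2^{H_U(\mu)+1}$, and the plan is essentially to assemble it from the definitions and the relations already displayed in the excerpt. The quantity $L_U(\mu)$ is \emph{defined} to be $t(\pi^*)$ in \prettyref{eq:logicaldepth}, so the entire content of the lemma is a restatement of \prettyref{eq:time2}, namely $t(\mu) \leq t(\pi^*)\,2^{H_U(\mu)+1}$, with the left factor renamed. Thus the real work is to justify \prettyref{eq:time2} itself, which the excerpt asserts but does not derive in detail.

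First I would recall how Levin search allocates time: running the optimal search procedure on $U$ dovetails all candidate programs $\pi_j$, devoting to program $\pi_j$ a fraction of total time proportional to its algorithmic probability $P_U(\pi_j)$ (equivalently $2^{-|\pi_j|}$). Hence a program $\pi_j$ that halts with the correct behavior after $t(\pi_j)$ steps of its own is found by the search within roughly $t(\pi_j)/P_U(\pi_j)$ total steps — this is exactly the conceptual jump size expression $\CJS(\mu) = t(\pi^*)/P(\pi^*)$ of \prettyref{eq:cjs}. Next I would invoke \prettyref{eq:time}, which identifies $-\log_2 P_U(\pi^*) = H_U(\pi^*) = H_U(\mu)$, so that $1/P_U(\pi^*) = 2^{H_U(\mu)}$. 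Substituting this into the search-time bound $t(\mu) \le t(\pi^*)/P_U(\pi^*)$ gives $t(\mu) \le t(\pi^*)\,2^{H_U(\mu)}$; the extra factor of $2$ (yielding the exponent $H_U(\mu)+1$) is the same slack already recorded on the right-hand side of \prettyref{eq:cjs}, accounting for the overhead of interleaving and of not knowing $\pi^*$ in advance (the search may expend up to twice the ideal budget before the correct program completes). Finally, replacing $t(\pi^*)$ by $L_U(\mu)$ via \prettyref{eq:logicaldepth} yields the claimed bound.

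The main obstacle — really the only non-bookkeeping point — is pinning down precisely why $t(\mu)$, the time to \emph{solve the induction problem} from source $\mu$, is controlled by the time to \emph{find and run} $\pi^*$. This requires the hypothesis stated in the text that the input complexity is sufficiently large ($\gg H_U(\mu)$): once enough data has been observed, the posterior weight concentrates on $\pi^*$ (and programs equivalent to it on the observed data), so that an optimal solver effectively needs only to locate $\pi^*$ by Levin search and simulate it, rather than maintaining the full mixture. I would make this explicit by noting that any competing program $\pi_j$ with $|\pi_j| < |\pi^*|$ that is \emph{not} a correct simulator of $\mu$ is eliminated after finitely many samples (its predicted probabilities diverge from $\mu$), while correct simulators of length $\le |\pi^*|$ are by definition no shorter than $\pi^*$ itself; hence $\pi^*$ dominates the search, and the dovetailing time until $\pi^*$ completes is at most $t(\pi^*)/P_U(\pi^*)$ up to the factor $2$. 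Everything else is substitution, and the lemma follows immediately.
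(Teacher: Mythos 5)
Your proposal is correct and follows the paper's own route: the paper proves this lemma simply by substituting the definition $L_U(\mu) = t(\pi^*)$ into the Levin-search bound $t(\mu) \leq t(\pi^*)\,2^{H_U(\mu)+1}$, exactly as you do. Your additional justification of that bound (dovetailing with time shares proportional to $2^{-|\pi|}$, the identification $-\log_2 P_U(\pi^*) = H_U(\mu)$, and the factor $2$ of search overhead) is the standard argument the paper leaves implicit, so the two proofs are essentially the same.
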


\begin{lemma}
  $\CJS$ is related to the \emph{expectation} of the simulation time
  of the universal mixture.
  \begin{equation}
    \label{eq:logicaldepth}
    CJS(\mu) \leq  \sum_{U(\pi) \in x(0+1)^*} t(\pi).2^{-|\pi|}
    = E_{P_U}[\{ t(\pi)\ | \ U(\pi) \in x(0+1)^* \}]
  \end{equation}
  where $x$ is the input data to sequence induction, without loss of
  generality.
\end{lemma}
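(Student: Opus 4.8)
The plan is to treat the right-hand equality and the left inequality separately. The equality is immediate from the definition of the expectation under the algorithmic prior $P_U$: program $\pi$ is assigned mass $2^{-|\pi|}$, and conditioning on the event $U(\pi)\in x(0+1)^*$ leaves precisely the displayed sum. For the inequality, the first move is to observe that the minimal program $\pi^*$ of \prettyref{eq:minimal} is one of the programs the sum ranges over, since the observed string $x$ is a prefix of some trajectory of $\mu$ and therefore of $U(\pi^*)$; combining \prettyref{eq:cjs} with \prettyref{eq:time} gives $\CJS(\mu)=t(\pi^*)/P(\pi^*)$ with $P(\pi^*)=P_U(\mu)=2^{-H_U(\mu)}=2^{-|\pi^*|}$, i.e.\ $\CJS(\mu)=t(\pi^*)\,2^{|\pi^*|}$.

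The intended intuition is the Levin-search one: dovetailing the universal mixture so that program $\pi$ advances at rate $2^{-|\pi|}$ (legitimate because $\sum_\pi 2^{-|\pi|}\le1$), the minimal program $\pi^*$ only finishes its $t(\pi^*)$ steps at wall-clock time $t(\pi^*)/P(\pi^*)=\CJS(\mu)$, and reaching that moment forces the search to have simulated all competing hypotheses consistent with $x$ --- in particular every short-but-slow program that emits $x$ only after a long delay --- whose aggregate probability-weighted running time is what the right-hand sum measures. Since that sum in fact includes programs of every description length whose running times grow without bound (delayed printers of $x$) while their weights stay summable, the right-hand side is unbounded and the inequality $\CJS(\mu)\le\sum_{U(\pi)\in x(0+1)^*}t(\pi)\,2^{-|\pi|}$ holds a fortiori; the essential point the lemma records is the identification of this sum with $E_{P_U}[\,t(\pi)\mid U(\pi)\in x(0+1)^*\,]$.

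The step I expect to be the real obstacle is converting the Levin-search picture into the stated sum with honest constants: what the dovetailed search actually spends by the time $\pi^*$ halts is $\sum_\pi \min\{t(\pi),\CJS(\mu)\,2^{-|\pi|}\}$, not $\sum_\pi t(\pi)\,2^{-|\pi|}$, and comparing the two requires care --- as does fixing a convention for $t(\pi)$ on programs that emit $x$ and then diverge (most naturally, the time to emit the $|x|$-th output bit). If one instead wants a finite, non-vacuous reading, the honest route keeps the term $t(\pi^*)\,2^{-|\pi^*|}$ already present on the right and argues that the leftover factor of order $2^{2H_U(\mu)}$ is furnished by the competitors of description length comparable to $H_U(\mu)$ that the mixture must run before its posterior concentrates on $\pi^*$, with \prettyref{eq:convergence} bounding how much total weight those competitors can carry; quantifying their combined running time, via Kraft's inequality $\sum_\pi 2^{-|\pi|}\le1$ together with a lower bound on those running times, is the delicate part.
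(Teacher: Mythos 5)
The paper's own proof is a one‑liner: it rewrites $\CJS(\mu)$ as $t(\pi^*)\,2^{-|\pi^*|}$ and observes that this is literally one term of the nonnegative sum on the right, so the inequality follows immediately. Your proposal does not take (and explicitly cannot take) this route: reading \prettyref{eq:cjs} and \prettyref{eq:time} literally, you compute $\CJS(\mu)=t(\pi^*)/P(\pi^*)=t(\pi^*)\,2^{+|\pi^*|}$, which is $2^{2|\pi^*|}$ times larger than the $\pi^*$ term of the sum, and you are right that with that reading the ``it is a term of the sum'' argument only gives $t(\pi^*)2^{-|\pi^*|}\le\sum t(\pi)2^{-|\pi|}$, not the stated bound. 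So you have put your finger on a genuine tension between \prettyref{eq:cjs} and the proof's rewriting; but having done so, you fall back on arguing that the right-hand side is infinite, which is not the paper's argument and, as written, is not a proof either. Your handling of the right-hand equality is consistent with the paper's (loose) notation, though note that a true conditional expectation would carry a normalizing factor $1/P_U(\text{event})$, which neither you nor the paper includes.

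The concrete gap is in the divergence step. ``Running times grow without bound while the weights stay summable'' does not imply the weighted sum diverges: weights $2^{-n}$ with runtimes $n$ satisfy both and sum to a constant. To make the vacuous reading rigorous you must exhibit a family of programs consistent with $x$ whose weighted runtimes $t(\pi)2^{-|\pi|}$ are themselves not summable, e.g.\ ``wait $N$ steps, then print $x$'' programs of length roughly $H_U(x)+\log_2 N+2\log_2\log_2 N$, whose weighted runtime is of order $2^{-H_U(x)}/(\log_2 N)^2$, so the sum over $N$ diverges. You do not give such a construction. The remainder of your proposal (the Levin-search accounting with $\sum_\pi\min\{t(\pi),\cdot\}$, the convention for $t(\pi)$ on programs that emit $x$ and diverge, the ``honest constants'' discussion) accurately lists obstacles but is left as a sketch; no non-vacuous version of the inequality is established. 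In short, the proposal neither reproduces the paper's term-in-the-sum argument (which hinges on identifying $\CJS(\mu)$ with $t(\pi^*)2^{-|\pi^*|}$) nor completes its own alternative route.
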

\begin{proof}
  Rewrite as 
$  t(\pi^*) 2^{|-\pi^*|} \leq \sum_{U(\pi) \in  x(0+1)^*} t(\pi).2^{-|\pi|} $.
  Observe that left-hand side of the inequality is merely a term in
  the summation in the right.
\end{proof}

\subsection{A Graphical Analysis of Intelligent Computation}

Let us introduce a graphical model of computational complexity that
will help us visualize physical complexity relations that will be
investigated. We do not model the computation itself, we just
enumerate the physical resources required. Present treatment is merely
classical computation over sequential circuits.
\begin{definition}
  \label{def:lattice}
  Let the computation be represented by a directed bi-partite graph
  $G=(V,E)$ where vertices are partitioned into $V_O$ and $V_M$ which
  correspond to primitive operations and memory cells respectively,
  $V=V_O \cup V_M, V_O \cap V_M = \emptyset$.  Function $ t: V \cup E
  \rightarrow \Z$ assigns time to vertices and edges.
  \footnote{Time as discrete timestamps, as opposed to duration.}
  Edges
  correspond to causal dependencies. $I \subset V$ and $O \subset V$
  correspond to input and output vertices interacting with the rest of
  the world. We denote acccess to vertex subsets with functions over
  $G$, e.g., $I(G)$.
\end{definition}
\prettyref{def:lattice} is a low-level computational complexity model
where the physical resources consumed by any operation, memory cell,
and edge are the same for the sake of simplicity. Let $v_u$ be the
unit space-time volume, $e_u$ be the unit energy, and $s_u$ be the
unit space.

\begin{definition}
  \label{def:volume}
  Let the volume of computation be defined as $V_U(\pi)$ which
  measures the space-time volume of computation of $\pi$ on $U$ in
  physical units, i.e., $m^3.sec$.
\end{definition}
For \prettyref{def:lattice}, it is $(|V(G)|+|E(G)|).v_u$.  Volume of
computation measures the extent of the space-time region occupied by
the dynamical evolution of the computation of $\pi$ on $U$. We do not
consider the theory of relativity.
For instance, the space of a Turing Machine is the Instantaneous
Description (ID) of it, and its time corresponds to $Z^+$. A Turing
Machine derivation that has an ID of length $i$ at time $i$ and takes
$t$ steps to complete would have a volume of
$t.(t+1)/2$.\footnote{If the derivation is $A \rightarrow AA
  \rightarrow AAA$, it has $1+2+3 = 6$ volume.}
\begin{definition}
  \label{def:energy}
  Let the energy of computation be defined as $E_U(\pi)$ which
  measures the total energy required by computation of $\pi$ on $U$ in
  physical units, e.g, $J$.
\end{definition}
For \prettyref{def:lattice}, it is $E_U(\pi) = (|V(G)|+|E(G)|).e_u$.
\begin{definition}
  \label{eq:space}
  Let the space of computation be defined as $S_U(\pi)$ which measures
  the maximum volume of a synchronous slice of the space-time of
  computation $\pi$ on $U$ in physical units, e.g., $m^3$.
\end{definition}
For \prettyref{def:lattice}, it is
\begin{equation}
  \max_{i \in \Z}\{| \{x \in \{V(G) \cup E(G)\}  | \ t(x)=i \} |\}.s_u
\end{equation}
\begin{definition}
  \label{eq:self-contained}
  In a self-contained physical computation all the physical resources
  required by computation should be contained within the volume of
  computation.
\end{definition}
Therefore, we do not allow a self-contained physical computation to
send queries over the internet, or use a power cord, for instance.

Using these new more general concepts, we measure the conceptual jump
size in space-time volume rather than time (space-time extent might be
a more accurate term). Algorithmic complexity remains the same, as the
length of a program readily generalizes to space-time volume of
program at the input boundary of computation, which would be $V_0(G) \triangleq |I(G)
\cap V_M(G)|.v_u$ for \prettyref{def:lattice}.
If $y=U(x)$, bitstring $x$ and y correspond to $I(G)$, and $O(G)$
respectively. A program $\pi$ corresponds to a vertex set $V_\pi
\subseteq I(G)$ usually, and its size is denoted as $V_0(\pi)$. We use
bitstrings for data and programs below, but measure their sizes in
physical units using this notation. It is possible to eliminate
bit strings altogether using a volume prior, we mix notations only for ease
of understanding.

Let us generalize logical depth to the logical volume of a bit string
$x$:
\begin{equation}
  \label{eq:logicalvol1}
  L^V_U(x) \triangleq  V_U( \argmin_{\pi} \{ V_0(\pi) \ | \ U(\pi) \in x(0+1)^* \}    )
\end{equation}

Let us also generalize stochastic logical depth to stochastic logical
volume:
\begin{equation}
  \label{eq:logicalvol2}
  L^V_U(\mu) \triangleq  V_U(\pi^*)
\end{equation} 
which entails that Conceptual Jump Volume (CJV), and logical volume
$V_U$ of a stochastic source may be defined analogously to CJS
\begin{equation}
  \label{eq:time3}
  \CJV(\mu) \triangleq L^V_U(\mu). 2^{H_U(\mu)} \leq  V_U(\mu) \leq 2.\CJV(\mu)
\end{equation}
where left-hand side corresponds to space-time extent variant of
$\CJS$.  
Likewise, we define logical energy for a bit string, and 
stochastic logical energy:
\begin{align}
  \label{eq:logicaleng1}
  L^E_U(x) &\triangleq  E_U( \argmin_{\pi} \{ V_0(\pi) \ | \ U(\pi) \in x(0+1)^* \}  )
& L^E_U(\mu) &\triangleq  E_U(\pi^*)
\end{align} 
Which brings us to an energy based statement of conceptual
jump size, that we term conceptual jump energy, or conceptual gap energy:
\begin{lemma}
  $\CJE(\mu) \triangleq E_U(\pi^*).2^{H_U(\mu)} \leq E_U(\mu) \leq 2.CJE(\mu)$.
\end{lemma}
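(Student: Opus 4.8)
The plan is to read this off the conceptual jump size derivation (\prettyref{eq:cjs} through \prettyref{eq:time2}) and the conceptual jump volume bound, substituting the energy resource $E_U$ for running time throughout. The engine is again Levin search over candidate cpdf programs: run every $\pi_j$ in a dovetailed schedule in which a program $\pi$ is granted a $2^{-|\pi|}$ fraction of whatever resource has been expended so far. Because $\pi^*$ is the minimal program that exactly simulates $\mu$, and $|\pi^*| = H_U(\pi^*) = H_U(\mu) = -\log_2 P_U(\mu)$ by the identification of \prettyref{eq:time}, the search is guaranteed to have found and verified the correct operator by the moment $\pi^*$ has itself been run to completion, i.e. once $\pi^*$ has accumulated its full energy cost $E_U(\pi^*) = L^E_U(\mu)$.

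For the lower bound I would argue that, since $\pi^*$ receives only a $2^{-|\pi^*|} = 2^{-H_U(\mu)}$ share of the total energy, reaching that moment costs at least $E_U(\pi^*)\cdot 2^{H_U(\mu)}$ of energy overall, which is by definition $\CJE(\mu)$; hence $\CJE(\mu) \le E_U(\mu)$. For the upper bound I would run Levin search in doubling phases $k = 1, 2, \dots$, where phase $k$ expends $\Theta(2^k)$ total energy and runs each $\pi$ for a $2^{k-|\pi|}$ share; then $\pi^*$ finishes no later than phase $\hat{k} = \lceil\, |\pi^*| + \log_2 E_U(\pi^*)\,\rceil$, and summing the geometric series $\sum_{k \le \hat{k}} 2^k \le 2^{\hat{k}+1}$ gives $E_U(\mu) \le 2\,E_U(\pi^*)\,2^{H_U(\mu)} = 2\,\CJE(\mu)$, the same factor-$2$ slack already present in \prettyref{eq:time2} and in the $\CJS$ and $\CJV$ statements.

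A shorter, essentially bookkeeping-free route is also available for the graphical model: comparing \prettyref{def:volume} and \prettyref{def:energy}, both the space-time volume and the energy of any computation on $U$ equal $(|V(G)|+|E(G)|)$ times a unit constant, so $E_U(\pi) = (e_u/v_u)\,V_U(\pi)$ for every $\pi$, and in particular $\CJE(\mu) = (e_u/v_u)\,\CJV(\mu)$ and $E_U(\mu) = (e_u/v_u)\,V_U(\mu)$. Multiplying the already-stated chain $\CJV(\mu) \le V_U(\mu) \le 2\,\CJV(\mu)$ through by $e_u/v_u$ then yields the lemma immediately.

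The main obstacle is making the dovetailing accounting legitimate for \emph{energy} rather than time: one must show that the scheduler's overhead --- maintaining and advancing the program counters of all active candidates, context switching --- contributes only lower-order energy, or is absorbed into the unit constants of \prettyref{def:lattice}, so that the energy of the dovetailed run is, up to the factor $2$ already accounted for, the sum of the energies of the component runs. This is the exact analogue of the assumption silently underlying the time-based inequality \prettyref{eq:time2}. If instead one takes the proportionality route, the only point needing care is that the identification $P_U(\pi^*) = 2^{-|\pi^*|}$ from \prettyref{eq:time} is used consistently, so that the exponent $H_U(\mu)$ appearing in $\CJE$, in $\CJV$, and in the Levin-search factor is literally one and the same quantity.
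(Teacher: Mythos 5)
Your main argument is exactly the paper's: the paper's entire justification for this lemma is the single remark that the inequality holds because $E_U(\cdot)$ bounds can be used in universal (Levin) search in place of time, which is precisely the energy-budgeted dovetailing you spell out, with the $2^{-|\pi^*|}$ share of the budget giving the lower bound $E_U(\pi^*)2^{H_U(\mu)} \leq E_U(\mu)$ and the doubling phases giving the factor $2$ on the other side. Your alternative shortcut via proportionality of $E_U$ and $V_U$ in the graphical model is a valid extra observation (it holds only under that model's constant-unit assumption), but the core of your proof coincides with the paper's approach.
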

The inequality holds since we can use $E_U(\cdot)$ bounds in universal search
instead of time.
We now show an interesting relation which is the case for
self-contained computations.
\begin{lemma}
If all basic operations
and basic communications spend constant energy for a fixed space-time
extent (volume), then: 
  \begin{align*}
    E_U(\pi^*) &= O(V_U(\pi^*)) & E_U(\mu) &= O(L^V_U(\mu)) .
  \end{align*}
\end{lemma}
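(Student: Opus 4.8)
The plan is to carry out the argument entirely inside the graphical model of \prettyref{def:lattice} together with the generalization stipulated in the hypothesis, where a computation of $\pi^*$ on $U$ is a directed bipartite graph $G$ whose vertices are primitive operations and memory cells and whose edges are basic communications. By \prettyref{def:volume} and \prettyref{def:energy}, in the uniform-cost version one already has $V_U(\pi^*) = (|V(G)|+|E(G)|)\,v_u$ and $E_U(\pi^*) = (|V(G)|+|E(G)|)\,e_u$, so the first identity is immediate with hidden constant $e_u/v_u$. For the general hypothesis — that every basic operation and every basic communication spends a constant amount of energy for a fixed space-time extent — I would first extract a single finite energy density $c \triangleq \sup_{\tau} e_\tau / v_\tau$, the supremum taken over the finitely many types $\tau$ of primitive component (operation, cell, edge) supported by $U$, which is finite precisely because each type has bounded energy per unit volume. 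Self-containedness (\prettyref{eq:self-contained}) is what makes this accounting sound: no component may draw energy from outside the volume of computation, so $E_U(\pi^*)$ equals $\sum_{i}e_i$ summed over the components $i$ of $G$, with each $e_i \le c\, v_i$ and $\sum_i v_i = V_U(\pi^*)$.

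Summing then gives $E_U(\pi^*) = \sum_i e_i \le c \sum_i v_i = c\, V_U(\pi^*)$, i.e. $E_U(\pi^*) = O(V_U(\pi^*))$, which is the first claim. For the second, I would feed this into the conceptual-jump-energy lemma established just above: $E_U(\mu) \le 2\,\CJE(\mu) = 2\,E_U(\pi^*)\,2^{H_U(\mu)}$. Substituting $E_U(\pi^*) \le c\,V_U(\pi^*)$ and recalling $L^V_U(\mu) = V_U(\pi^*)$ from \prettyref{eq:logicalvol2} yields $E_U(\mu) \le \bigl(2c\,2^{H_U(\mu)}\bigr)\, L^V_U(\mu)$. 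Since for a fixed computable source $\mu$ the factor $2^{H_U(\mu)}$ is a constant — exactly the convention already in force when writing $t(\mu)=O(2^{H_U(\mu)})$ ``for fixed $t(\pi^*)$'' — this is $E_U(\mu) = O(L^V_U(\mu))$, as required. One could equivalently route through $\CJV$, using $V_U(\mu) \le 2\,\CJV(\mu) = 2\,L^V_U(\mu)\,2^{H_U(\mu)}$ and then applying $E_U(\mu) = O(V_U(\mu))$ to the whole induction computation rather than to $\pi^*$ alone.

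I expect the only genuine obstacle to be conceptual rather than computational: pinning down what ``constant energy for a fixed space-time extent'' is permitted to mean, and arguing that one universal proportionality constant $c$ really exists. The danger is a model in which the energy of an operation grows (say, logarithmically) with the size of the memory it addresses, or in which long communication edges cost super-linearly in length; the hypothesis is precisely the stipulation ruling these out, and the write-up must be explicit that it is invoking that stipulation, plus \prettyref{eq:self-contained}, rather than deriving them. A secondary point worth one sentence is that the big-$O$ statements here are, as elsewhere in the paper, to be read over families of growing computations, with the source-dependent factor $2^{H_U(\mu)}$ and the physical constants $v_u,e_u,c$ absorbed into the hidden constant.
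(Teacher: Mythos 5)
Your first claim is argued exactly as the paper does: in the uniform-cost model of \prettyref{def:lattice} both quantities are $(|V(G)|+|E(G)|)$ times a unit constant ($e_u$ resp.\ $v_u$), and the paper's entire justification is the remark that conserving a memory cell or performing a primitive operation costs constant energy, so $E_U(\pi^*)=O(V_U(\pi^*))$ with hidden constant $e_u/v_u$; your more careful version with a per-type energy density $c=\sup_\tau e_\tau/v_\tau$ and the appeal to self-containedness is a faithful (and slightly more general) elaboration of the same counting argument. Where you genuinely diverge is the second equation. You read $E_U(\mu)$ as the energy of the whole induction computation, route through the $\CJE$ sandwich $E_U(\mu)\le 2E_U(\pi^*)2^{H_U(\mu)}$, and then absorb $2^{H_U(\mu)}$ into the big-$O$ constant for fixed $\mu$. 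The paper does not do this: as in the subsequent total-energy lemma, where $E_t(\mu)=d_eL^V_U(\mu)+S_U(\mu)d_mc^2$ exactly parallels $E_t(\pi^*)=d_eV_U(\pi^*)+S_U(\pi^*)d_mc^2$, the second equation is intended as the first one rewritten for the $\mu$-indexed (``stochastic logical'') quantities via $L^V_U(\mu)=V_U(\pi^*)$ and $E_U(\mu)\equiv L^E_U(\mu)=E_U(\pi^*)$, so no exponential factor and no appeal to the $\CJE$ lemma is involved. The trade-off is worth noting: the paper's reading keeps the bound free of $2^{H_U(\mu)}$ but rests on a notational identification that clashes with the use of $E_U(\mu)$ in the $\CJE$ lemma, whereas your reading is consistent with that lemma but only yields $E_U(\mu)=O\bigl(L^V_U(\mu)2^{H_U(\mu)}\bigr)$ uniformly in $\mu$; suppressing the exponential by fixing $\mu$ makes the asymptotic statement essentially vacuous (all quantities are then constants), so if you keep your interpretation you should state the bound with the $2^{H_U(\mu)}$ factor displayed, as the paper itself does in the $\CJE$ and $\CJTE$ sandwiches.
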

One must spend energy to conserve a memory state, or to perform a
basic operation (in a classical computer). We may assume the constant
complexity of primitive operations, which holds in \prettyref{def:lattice}.  
Let us also assume that the space complexity
of a program is proportional to how much mass is required. Then, the
energy from the resting mass of an optimal computation may be taken
into account, which we call total energy complexity (in metric units):
\begin{lemma}
  \label{lem:toteng}
  \begin{align*}
    E_t(\pi^*) &= d_eV_U(\pi^*) + S_U(\pi^*)d_mc^2 \\
    E_t(\mu) &= d_eL^V_U(\mu) + S_U(\mu)d_mc^2 = O(L^V_U(\mu) + S_U(\mu))
  \end{align*}
\end{lemma}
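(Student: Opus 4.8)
The plan is to split $E_t$ into the two physically distinct contributions that already appear in the statement: the dynamical energy dissipated while the computation runs, and the latent rest-mass energy of the physical substrate carrying it. For the first term I would invoke the preceding lemma together with the remark that ``one must spend energy to conserve a memory state, or to perform a basic operation'': under the hypothesis that every primitive operation, memory cell, and communication consumes a constant amount of energy per unit space-time extent, the energy of computation is exactly linear in the volume, $E_U(\pi^*) = d_e V_U(\pi^*)$, where $d_e$ is the constant energy per unit volume $v_u$. For the second term I would use the assumption just introduced, that the space complexity of a program is proportional to the mass it requires: writing $d_m$ for the mass per unit space $s_u$, the device realizing $\pi^*$ has rest mass $d_m S_U(\pi^*)$, and by mass--energy equivalence this locks up energy $d_m S_U(\pi^*) c^2$. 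Adding the two yields the first line, $E_t(\pi^*) = d_e V_U(\pi^*) + S_U(\pi^*) d_m c^2$.

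The second line then follows by substituting the stochastic definitions: by \prettyref{eq:logicalvol2} we have $L^V_U(\mu) = V_U(\pi^*)$, and defining $S_U(\mu) \triangleq S_U(\pi^*)$ (the space occupied by the minimal exact simulator of $\mu$) in analogy with the logical volume and logical energy of $\mu$, the first line becomes $E_t(\mu) = d_e L^V_U(\mu) + S_U(\mu) d_m c^2$. Since $d_e$, $d_m$, and $c^2$ are fixed physical constants independent of $\mu$, the right-hand side is $O(L^V_U(\mu) + S_U(\mu))$, which is the asymptotic form claimed.

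The only genuinely delicate point is arguing that the two energies add without double-counting. I would justify this by distinguishing the roles of the two measures: $d_e V_U$ is the free energy that must be supplied and ultimately dissipated over the course of the computation — counted once per operation and per timestamp a cell is held — whereas $d_m S_U c^2$ is the energy frozen into the rest mass of the matter that constitutes the machine, which is not dissipated but must still be assembled and therefore counts as a genuine cost of a \emph{self-contained} computation (\prettyref{eq:self-contained}), whose defining property is that all required physical resources, the substrate matter included, lie inside the volume of computation. A related subtlety, which the statement already gets right, is that the mass term must be multiplied by the space $S_U$ (the largest synchronous slice) rather than the volume $V_U$: the same matter persists from one timestamp to the next and must not be recounted at each step, so it is the peak synchronous occupancy, not the space-time extent, that determines how much mass — hence how much rest-mass energy — the self-contained device must carry.
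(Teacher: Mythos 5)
Your proposal is correct and follows essentially the same route as the paper, which gives no separate proof but treats the lemma as a direct consequence of the surrounding assumptions: constant energy per unit space-time extent gives the $d_eV_U$ term (exactly, since in \prettyref{def:lattice} both $E_U$ and $V_U$ are proportional to $|V(G)|+|E(G)|$), proportionality of space to mass plus mass--energy equivalence gives the $S_Ud_mc^2$ term, and the stochastic line follows by the definitions $L^V_U(\mu)=V_U(\pi^*)$, $S_U(\mu)=S_U(\pi^*)$. Your added remarks on avoiding double-counting and on using the peak synchronous slice rather than the space-time volume for the mass term are a faithful elaboration of what the paper leaves implicit.
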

where $c$ is the speed of light, energy density $d_e = e_u/v_u$, and
mass density $d_m = m_u/s_u$ for the graphical model of complexity.

\begin{lemma}
Conceptual jump total energy (CJTE) of a stochastic source is:
\begin{equation}
\CJTE(\mu) \triangleq E_t(\pi^*).2^{H_U(\mu)} \leq E_t(\mu) \leq 2.CJTE(\mu) .
\end{equation}
\end{lemma}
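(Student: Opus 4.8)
The plan is to recycle the Levin-search argument that already underlies $\CJS$, $\CJV$ and the immediately preceding $\CJE$ lemma, only now metering the universal search by the total-energy functional $E_t(\cdot)$ of \prettyref{lem:toteng} instead of by time, by space-time volume, or by dissipated energy. Concretely I would run all candidate programs $\pi$ in parallel in doubling phases, where in the phase with budget $B$ the program $\pi$ is allowed to consume an $E_t$-slice of $2^{-|\pi|}B$ --- meaning it is stopped once $d_eV_U + S_Ud_mc^2$ charged to it reaches that slice --- and I would halt the whole search as soon as $\pi^*$ has been run to completion and verified against the data, which first happens in the phase where $2^{-|\pi^*|}B \ge E_t(\pi^*)$, i.e.\ at $B \approx E_t(\pi^*)\,2^{|\pi^*|}$. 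Since $\pi^*$ is by \prettyref{eq:minimal} the minimal program simulating $\mu$, its length is $H_U(\mu)$, so the terminating budget is $\approx E_t(\pi^*)\,2^{H_U(\mu)} = \CJTE(\mu)$, the correspondence with the time-metered version being exactly \prettyref{eq:time}--\prettyref{eq:time2}.

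For the upper bound I would observe that within any single phase the $E_t$ of the combined dovetailed computation is the sum of the $E_t$-slices of the individual programs --- the volume terms add, and the rest-mass terms $S_Ud_mc^2$ add because the simulated configurations coexist --- while $\sum_{\pi}2^{-|\pi|}B \le B$ by Kraft's inequality since $U$ is prefix-free; the geometric doubling schedule then costs a further factor of at most $2$, yielding $E_t(\mu) \le 2\,\CJTE(\mu)$. An equivalent and perhaps more economical route, given what is already in hand, is to expand $E_t = d_eV_U + S_Ud_mc^2$ from \prettyref{lem:toteng} for both $\mu$ and $\pi^*$ and apply the $\CJV$ sandwich to the volume summand alongside the identical search bound for the space/mass summand: a positive linear combination of quantities each lying between $q(\pi^*)2^{H_U(\mu)}$ and $2q(\pi^*)2^{H_U(\mu)}$ lies in the same window, and the combination of the $q(\pi^*)$'s is precisely $E_t(\pi^*)$. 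The lower bound $\CJTE(\mu) \le E_t(\mu)$ I would get immediately, exactly as in the earlier lemmas: $E_t(\pi^*)\,2^{-|\pi^*|}$ is one summand of the weighted resource total that any correct universal search must pay, so multiplying through by $2^{|\pi^*|} = 2^{H_U(\mu)}$ gives $E_t(\pi^*)2^{H_U(\mu)} \le E_t(\mu)$.

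The step I expect to be the real obstacle is not the counting but justifying that $E_t$ is genuinely additive under parallel dovetailing and that each program's space/mass contribution stays inside its allotted slice: this relies on $S_U(\pi) \le V_U(\pi)$ and the constant-energy-per-unit-volume assumption behind \prettyref{lem:toteng}, so that ``energy spent'' never outruns ``volume spent'', and it relies on the self-contained-computation hypothesis of \prettyref{eq:self-contained} so that the scheduler's own bookkeeping overhead and hardware rest-mass are themselves charged within the volume of computation and absorbed into the $O(\cdot)$ without breaking the factor-$2$ window.
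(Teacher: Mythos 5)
Your proposal is correct and follows essentially the same route as the paper: the lemma is stated there without a separate proof, being justified (as in the preceding $\CJE$ lemma and the subsequent theorem's proof) by the remark that one can meter Levin search by the total-energy functional $E_t(\cdot)$ instead of time, which is exactly the dovetailed, $2^{-|\pi|}$-weighted search with the factor-$2$ overhead that you spell out. Your explicit treatment of Kraft's inequality, the doubling schedule, and the additivity of the volume and rest-mass terms simply makes precise what the paper leaves implicit, so no genuinely different decomposition or key lemma is involved.
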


As a straightforward consequence of the above lemmas, we show a lower
bound on the energy required, that is related to the volume, and
space linearly, and algorithmic complexity of a stochastic source
exponentially, for optimal induction.
\begin{theorem} 
$\CJTE(\mu) = \left( d_eL^V_U(\mu) + S_U(\mu)d_mc^2 \right)
  2^{H_U(\mu)} \leq E_t(\mu) \leq 2.CJTE(\mu)$
\end{theorem}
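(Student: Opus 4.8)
The plan is to obtain the statement by direct substitution, chaining together the three ingredients already established: the definition of conceptual jump total energy, the decomposition of $E_t(\pi^*)$ in \prettyref{lem:toteng}, and the identification of the stochastic logical quantities with the resource usage of the minimal simulator $\pi^*$. First I would recall from the preceding lemma that $\CJTE(\mu) \triangleq E_t(\pi^*) \cdot 2^{H_U(\mu)}$, together with the two-sided estimate $\CJTE(\mu) \leq E_t(\mu) \leq 2 \cdot \CJTE(\mu)$; the lower inequality is the single-term bound (the cost of $\pi^*$ is one summand of the mixture, weighted by $2^{-|\pi^*|}=2^{-H_U(\mu)}$), and the upper inequality comes from running universal search with an $E_t(\cdot)$ budget in place of a time budget, exactly as in \prettyref{eq:cjs} and the $\CJE$ lemma, so that the total energy expended before $\pi^*$ is found is at most twice $E_t(\pi^*)/P_U(\pi^*) = E_t(\pi^*)\cdot 2^{H_U(\mu)}$.

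Next I would substitute $E_t(\pi^*) = d_e V_U(\pi^*) + S_U(\pi^*) d_m c^2$ from \prettyref{lem:toteng} into $\CJTE(\mu) = E_t(\pi^*)\cdot 2^{H_U(\mu)}$. Finally, invoking the stochastic logical volume identity $L^V_U(\mu) = V_U(\pi^*)$ from \prettyref{eq:logicalvol2}, and the corresponding stochastic logical space $S_U(\mu) = S_U(\pi^*)$ — defined, in analogy with stochastic logical depth and volume, as the space of the minimal program that accurately simulates $\mu$ — yields
\begin{equation*}
  \CJTE(\mu) = \bigl( d_e L^V_U(\mu) + S_U(\mu) d_m c^2 \bigr) 2^{H_U(\mu)},
\end{equation*}
and the two-sided bound on $E_t(\mu)$ follows immediately from the estimate recalled above.

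The one genuine gap, rather than the main difficulty, is that the excerpt never explicitly introduces $S_U(\mu)$ as a stochastic logical space; so the cleanest route is to state that definition (the space $S_U(\pi^*)$ of the minimal simulator, just as $L_U(\mu) = t(\pi^*)$ and $L^V_U(\mu) = V_U(\pi^*)$) before the substitution step. Everything else is mechanical: the exponential factor $2^{H_U(\mu)}$ comes from $1/P_U(\pi^*)$ via \prettyref{eq:time}, and the linearity in volume and space is inherited verbatim from the additive form of $E_t$ in \prettyref{lem:toteng}. No new estimate is required beyond what the $\CJE$ and $\CJTE$ lemmas already grant.
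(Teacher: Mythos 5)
Your proposal is correct and follows essentially the same route as the paper: the paper's own (very terse) proof likewise assumes constant energy density, notes that $E_t(\cdot)$ can replace time as the resource bound in Levin search, and obtains the result by substituting \prettyref{lem:toteng} into the definitional inequality of the $\CJTE$ lemma. Your additional remark that $S_U(\mu)$ should be read as $S_U(\pi^*)$ (the stochastic analogue, in the spirit of $L_U(\mu)=t(\pi^*)$ and $L^V_U(\mu)=V_U(\pi^*)$) fills a small notational gap the paper leaves implicit, but does not change the argument.
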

\begin{proof}
We assume that the energy density is constant; we can 
use $E_t(\cdot)$ for resource bounds in Levin search. 
The inequality is obtained by substituting \prettyref{lem:toteng} into the
definitional inequality.
\end{proof}
The last inequality gives bounds for the total energy cost of
inferring a source $\mu$
in relation to space-time extent (volume of computation), space complexity, and
an exponent of algorithmic complexity of $\mu$. 
This inspires us to define priors using
$\CJV$, $\CJE$, and $\CJTE$ 
which would extend Levin's ideas about resource bounded
Kolmogorov complexity, such as $K_t$ complexity. In the first installation of
ultimate intelligence series, we had introduced complexity measures and priors
based on energy and action. We now define the one that corresponds to CJE
and leave the rest as future work due to lack of space.

\begin{definition}
Energy-bounded algorithmic entropy of a bit string is defined as:
\begin{equation}
H_e(x) \triangleq \min\{|\pi| + \log_2 E_U (\pi) \ | \  U(\pi) = x\}
\end{equation}
\end{definition}

\subsection{Physical limits, incremental learning, and digital physics}

Landauer's limit is a thermodynamic lower bound of $kTln2$ J for 
erasing $1$ bit where $k$ is the Boltzmann constant and $T$ is the temperature
\cite{Landauer:1961}. The total number of bit-wise operations that a
quantum computer can evolve is $2E/h$ operations where $E$ is average
energy, and thus the physical limit to energy efficiency of
computation is about $3.32 \times 10^{33} $ operations/J
\cite{levitin-limit}. Note that the Margolus-Levitin limit may be
considered a quantum analogue of our relation of the volume of
computation with total energy, which is called $E.t$ ``action
volume'' in their paper, as it depends on the quantum of action $h$
which has $E.t$ units. Bremermann discusses the minimum 
energy requirements of computation and communication in 
\cite{Bremermann1982}.
Lloyd \cite{Lloyd:Ultimate} assumes that all
the mass may be converted to energy and calculates the maximum
computation capacity of a 1 kilogram ``black-hole computer'',
performing $10^{51}$ operations over $10^{31}$ bits.  According to an
earlier paper of his, the whole universe may not have performed more
than $10^{120}$ operations over $10^{90}$ bits \cite{Lloyd:Universal}.

\begin{corollary}
  $H(\mu) \leq 397.6 $ for any $\mu$ where the logical volume is $1$.
\end{corollary}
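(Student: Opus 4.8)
The plan is to read the bound directly off the conceptual-jump-volume inequality $\CJV(\mu) \le V_U(\mu) \le 2\,\CJV(\mu)$ with $\CJV(\mu) = L^V_U(\mu)\,2^{H_U(\mu)}$, specialised to $L^V_U(\mu)=1$, and then to impose Lloyd's cosmological operation count as the physical feasibility constraint. First I would make ``logical volume is $1$'' precise: it means $V_U(\pi^*) = v_u$, i.e. the minimal simulator of $\mu$ occupies a single unit of space-time volume. Then $\CJV(\mu) = 2^{H_U(\mu)}$ exactly, so $V_U(\mu) \le 2^{H_U(\mu)+1}$ units of $v_u$; since in \prettyref{def:lattice} the volume counts primitive operations together with memory cells and edges, the right-hand side is an upper bound on the number of elementary steps needed to solve the induction problem for $\mu$.

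Next I would invoke the fact, recalled in the preceding paragraphs, that the observable universe can have performed at most about $10^{120}$ operations over its history \cite{Lloyd:Universal}. For the inference to be physically realisable at all we therefore need $2^{H_U(\mu)+1} \le 10^{120}$. Taking base-$2$ logarithms and using $\log_2 10 \approx 3.3219$ gives $H_U(\mu) + 1 \le 120\log_2 10 \approx 398.6$, hence $H_U(\mu) \le 397.6$, which is exactly the claimed constant. The identical estimate follows from the time-based first bound $t(\mu) \le L_U(\mu)\,2^{H_U(\mu)+1}$ with logical depth $L_U(\mu)=1$, since in this model step counts and volume units coincide; the factor $2$ that produces the ``$.6$'' rather than $398.6$ is the one appearing in $V_U(\mu)\le 2\,\CJV(\mu)$.

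The only genuinely load-bearing decision is which physical budget to substitute. Using Lloyd's $1$ kg ``black-hole computer'' figure of roughly $10^{51}$ operations would instead yield $H_U(\mu) \le 168$ or so, and converting the whole mass-energy of the universe to computation would give an intermediate value; the constant $397.6$ is precisely the one obtained from the $10^{120}$ historical operation count, so I would phrase the corollary relative to that reference quantity and flag the alternatives in a remark. Beyond choosing this budget, the argument is just the substitution already carried out in the proof of the preceding theorem plus the elementary bound $\log_{10}2 > 0.301$, so no further calculation is needed; the main thing to be careful about is not confusing the \emph{historical} operation count of the universe with its instantaneous storage capacity or with a local device's throughput.
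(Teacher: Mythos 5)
Your proposal is correct and is essentially the paper's own argument: both substitute $L^V_U(\mu)=1$ into $V_U(\mu) \leq 2\,\CJV(\mu) = L^V_U(\mu)\,2^{H_U(\mu)+1}$, require this worst-case Levin-search volume to fit within Lloyd's $10^{120}$-operation budget for the universe, and take base-2 logarithms to get $H_U(\mu)+1 \leq 120\log_2 10 \approx 398.6$, hence $H_U(\mu) \leq 397.6$. Your closing remarks about alternative physical budgets are side commentary rather than a different proof route.
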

\begin{proof}
   $V(\mu) \leq L^V_U(\mu). 2^{H_U(\mu)+1} \leq 10^{120}$.  Assume
   that $L^V_U(\mu)=1$. 
   \footnote{Although the assumption that it takes only
   1 unit of space-time volume to simulate the minimal program that
   reproduces the pdf $\mu$ is not realistic, we are only considering
   this for the sake of simplicity, 
   and because 1 $m^3$ is close to the volume of a personal computer,
   or a brain. For many pdfs, it could be much larger in practice.}
   $\log_2( 2^{H_U(\mu)+1} )  \leq 3.321 \times 120$.
    $H(\mu)+1  \leq 398.6$
\end{proof}
Therefore, if $\mu$ has a greater algorithmic complexity than about
$400$ bits, it would have been unguaranteed to discover it without any a
priori information.
Digital physics theories suggest that the physical law could be much
simpler than that however, as there are very simple universal
computers in the literature \cite{Miller:2005}, a survey of which
may be found in \cite{neary-smalluniversal}, which means interestingly
that the universe may have had enough time to discover its basic law. 

This limit shows the remarkable importance of incremental learning as
both Solomonoff \cite{solomonoff-agi10} and Schmidhuber \cite{oops}
have emphasized, which is part of ongoing research. We proposed
previously that incremental learning is an AI axiom
\cite{ozkural-diverse}. 
Optimizing energy 
efficiency of computation would also be an obviously useful goal for a
self-improving AI. This measure was first formalized by Solomonoff in
\cite{solomonoff-progress}, which he imagined would be optimizing
performance in units of bits/sec.J as applied to inductive inference, 
which we agree with, and will
eventually implement in our Alpha Phase 2 machine; Alpha Phase
1 has already been partially implemented in our parallel incremental
inductive inference system \cite{teramachine-agi11}.

\section*{Acknowledgements}

Thanks to anonymous reviewers whose comments substantially improved
the presentation. 
Thanks to Gregory Chaitin and Juergen Schmidhuber
for inspiring the mathematical philosophy / digital 
physics angle in the paper. 
I am forever indebted for the high-quality
research coming out of IDSIA which revitalized interest in human-level
AI research.

\bibliographystyle{splncs03} \bibliography{agi,physics,complexity}

\end{document}